\def\RC{\textrm{RC}}
\title{Extending Environments to Measure Self-reflection in Reinforcement Learning}
\author{\name Samuel Allen Alexander \email samuelallenalexander@gmail.com\\
\addr The U.S.\ Securities and Exchange Commission\\
\AND
\name Michael Castaneda\\
\addr KX\\
\AND
\name Kevin Compher\\
\addr InQTel\\
\AND
\name Oscar Martinez\\
\addr The U.S.\ Securities and Exchange Commission
}
\begin{document}

\maketitle

\begin{abstract}
  We consider an extended notion
  of reinforcement learning in which the environment can simulate
  the agent and base its outputs on the agent's hypothetical behavior.
  Since good performance usually requires paying attention
  to whatever things the environment's outputs are based on,
  we argue that for an agent to achieve
  on-average good performance across many such extended environments,
  it is necessary for the agent to self-reflect. Thus
  weighted-average performance over the space of all suitably
  well-behaved extended environments could be considered a way of
  measuring how self-reflective an agent is.
  We give examples of extended environments and introduce a simple
  transformation which experimentally seems to increase some
  standard RL agents' performance in a certain type of extended
  environment.
\end{abstract}

\section{Introduction}

An obstacle course might react to what you do: for example, if you step on a certain
button, then spikes might appear. If you spend enough time in such an obstacle course,
you should eventually figure out such patterns.
But imagine an ``oracular'' obstacle course which reacts to
what you would hypothetically do in counterfactual scenarios: for example, there is
no button, but spikes appear
if you \emph{would} hypothetically step on the button if there was one. Without
self-reflecting about what you would hypothetically do in counterfactual scenarios, it
would be difficult to figure out such patterns. This suggests that in order to perform
well (on average) across many such obstacle courses, some sort of self-reflection is
necessary.

This is a paper about measuring the degree to which a Reinforcement Learning
(RL) agent is self-reflective. By a self-reflective agent, we mean an agent which
acts not just based on environmental rewards and observations, but also based on
considerations of its own hypothetical behavior.
We propose an abstract formal measure of RL agent self-reflection
similar to the universal intelligence measure of \citet{legg2007universal}.
Legg and Hutter proposed to define the intelligence of an agent to be its
weighted average performance over the space of all suitably well-behaved
traditional environments (environments that react only to
the agent's actions). In our proposed measure of
universal intelligence with self-reflection, the measure of any agent
would be that agent's weighted average performance over the space of
all suitably well-behaved \emph{extended environments}, environments that
react not only to what the agent does
but to what the agent would hypothetically do.
For good weighted-average performance over such extended environments,
an agent would need to self-reflect about itself, because
otherwise, environment responses which depend on the agent's own hypothetical actions
would often seem random and unpredictable.
The extended environments which we
consider are a departure from standard RL environments, but this does not interfere
with their usage for judging standard RL agents: one can run a standard agent in an
extended environment in spite of the
latter's non-standardness.

To understand why extended environments (where the environment reacts to what the agent
would hypothetically do) incentivize self-reflection, consider a game involving a box.
The box's contents change from playthrough to playthrough, and the game's mechanics
depend upon those contents. The player may optionally choose to look inside the box,
at no cost: the game does not change its behavior based on whether the
player looks inside the box. Clearly, players who look inside the box have
an advantage over those who do not.
The extended environments we consider
are similar to this example. Rather than depending on a box,
the environment's mechanics depend on
the player (via simulating the player).
Just as the player in the above example gains advantage by looking in the box,
an agent designed for extended environments could gain an advantage by examining
itself, that is, by self-reflecting.

One might try to imitate an extended environment with a non-extended environment by
backtracking---rewinding the environment itself to a prior state after seeing how the
agent performs along one path, and then sending the agent along a second path.
But the agent itself would retain memory of the first path, and the agent's decisions
along the second path might be altered by said memories. Thus the result would not be
the same as immediately sending the agent along the second path while secretly simulating
the agent to determine what it would do if sent along the first path.

Alongside the examples in this paper, we are publishing an MIT-licensed
open-source library \citep{library} of other extended environments.
We are
inspired by similar (but non-extended) libraries and benchmark collections
\citep{bellemare2013arcade,beyret2019animal,brockman2016openai,chollet2019measure,cobbe2020leveraging,hendrycks2019benchmarking,nichol2018retro,yampolskiy2017detecting}.

Our self-reflection measure is a variation of Legg and Hutter's measure of
universal intelligence \citep{legg2007universal}. Legg and Hutter argue
that to perfectly measure RL agent performance,
one should aggregate the agent's performance across the whole space of
all sufficiently well-behaved (traditional) environments, weighted using
an appropriate distribution.
Rather than a uniform distribution (susceptible to no-free-lunch theorems),
Legg and Hutter suggest assigning more weight to simpler
environments and less weight to more complex environments.
Complexity of environments is measured using Kolmogorov complexity.
Although the original Legg-Hutter construction is restricted to traditional
environments (which react solely to the agent's actions), the construction
does not actually depend on this restriction: all it depends on is that
for every agent $\pi$, for every suitably well-behaved environment $\mu$,
there is a corresponding
expected total reward $V^\pi_\mu$ which that agent would obtain from that
environment. Thus the construction adapts seamlessly to our so-called
extended environments, yielding a measure of the agent's average performance
across such environments. And if, as we argue, good average performance across such
extended environments requires self-reflection, then the resulting measure
would seem to capture the agent's ability to use self-reflection to
learn such extended environments.

\section{Preliminaries}
\label{prelimsecn}

We take a formal approach to RL to make the
mathematics clear.
This formality differs from how RL is implemented in practice.
In Section \ref{practicalformalizationsecn} we will discuss a more
practical formalization.

Our formal treatment of RL is based on Section 4.1.3 of \citep{hutter2004universal},
except that we assume the agent receives an initial percept before
taking its initial action (whereas in Hutter's book,
the agent acts first), and, for simplicity, we require determinism
(the more practical formalization in Section \ref{practicalformalizationsecn}
will allow non-determinism).
We will write $x_1y_1\ldots x_ny_n$ for the length-$2n$ sequence
$\langle x_1,y_1,\ldots,x_n,y_n\rangle$
and $x_1y_1\ldots x_n$ for the length-$(2n-1)$ sequence
$\langle x_1,y_1,\ldots,x_n\rangle$. In particular when $n=1$,
we will write $x_1y_1\ldots x_n$ for $\langle x_1\rangle$,
even if $y_1$ is not defined.
We assume fixed finite sets of actions and observations. By a \emph{percept}
we mean a pair $x=(r,o)$ where $o$ is an observation and $r\in\mathbb Q$
is a reward.

\begin{definition}
\label{agentenvironmentdefn}
(RL agents and environments)
  \begin{enumerate}
    \item
    A (non-extended) \emph{environment} is a
    function $\mu$ which outputs an initial
    percept $\mu(\langle\rangle)=x_1$ when given the empty sequence $\langle\rangle$
    as input
    and which, when given a sequence $x_1y_1\ldots x_ny_n$
    as input (where each $x_i$ is a percept and each
    $y_i$ is an action), outputs a percept
    $\mu(x_1y_1\ldots x_ny_n)=x_{n+1}$.
    \item
    An \emph{agent} is a
    function $\pi$ which, given a sequence $x_1y_1\ldots x_n$ as input
    (each $x_i$ a percept, each $y_i$ an action),
    outputs an action $\pi(x_1y_1\ldots x_n)=y_n$.
    \item
    If $\pi$ is an agent and $\mu$ is an environment, the \emph{result of
    $\pi$ interacting with $\mu$} is the infinite sequence
    $x_1y_1x_2y_2\ldots$ defined by:
    \begin{align*}
      x_1 &= \mu(\langle\rangle) & y_1 &= \pi(\langle x_1\rangle)\\
      x_2 &= \mu(\langle x_1,y_1\rangle) & y_2 &= \pi(\langle x_1,y_1,x_2\rangle)\\
        &\cdots & &\cdots\\
      x_n &= \mu(x_1y_1\ldots x_{n-1}y_{n-1}) & y_n &= \pi(x_1y_1\ldots x_n)\\
        &\cdots & &\cdots
    \end{align*}
  \end{enumerate}
\end{definition}

In the following definition,
we extend environments by allowing their outputs to depend also on $\pi$.
Intuitively, extended environments can simulate the
agent. This can be considered a dual version of AIs which
simulate their environment, as in Monte Carlo Tree Search
\citep{chaslot2008monte}.

\begin{definition}
\label{extendedenvironmentsdefn}
(Extended environments)
\begin{enumerate}
  \item
  An \emph{extended environment} is a
  function $\mu$ which outputs initial percept $\mu(\pi,\langle\rangle)=x_1$
  in response to input $(\pi,\langle\rangle)$ where $\pi$ is an agent;
  and which, when given input $(\pi,x_1y_1\ldots x_ny_n)$ (where
  $\pi$ is an agent, each $x_i$ is a percept and each $y_i$ is
  an action), outputs a percept $\mu(\pi,x_1y_1\ldots x_ny_n)=x_{n+1}$.
  \item
  If $\pi$ is an agent
  and $\mu$ is an extended environment, the \emph{result of $\pi$
  interacting with $\mu$} is the infinite sequence $x_1y_1x_2y_2\ldots$ defined by:
    \begin{align*}
      x_1 &= \mu(\pi, \langle\rangle) & y_1 &= \pi(\langle x_1\rangle)\\
      x_2 &= \mu(\pi, \langle x_1,y_1\rangle) & y_2 &= \pi(\langle x_1,y_1,x_2\rangle)\\
        &\cdots & &\cdots\\
      x_n &= \mu(\pi, x_1y_1\ldots x_{n-1}y_{n-1}) & y_n &= \pi(x_1y_1\ldots x_n)\\
        &\cdots & &\cdots
    \end{align*}
\end{enumerate}
\end{definition}

The reader might notice that it is superfluous for $\mu$ to depend both on
$\pi$ and $x_1y_1\ldots x_ny_n$ since, given just $\pi$ and $n$,
one can reconstruct $x_1y_1\ldots x_ny_n$.
We intentionally choose the superfluous definition because it
better captures our intuition (and makes clear the similarity to Definition
\ref{agentenvironmentdefn}).
For the sake of simpler mathematics, we have not
included non-determinism in our formal definition, but in practice,
agents and environments are often non-deterministic, so that $\pi$ and $n$
do not determine $x_1y_1\ldots x_ny_n$ (our practical treatment, discussed in
Section \ref{practicalformalizationsecn}, does allow non-determinism).

The fact that classical agents can interact with extended environments
(Definition \ref{extendedenvironmentsdefn} part 2) implies that various universal
RL intelligence measures
\citep{legg2007universal,hernandez,gavane,legg2013approximation},
which measure performance in (non-extended) environments,
easily generalize to measure self-reflective intelligence
(performance in extended environments).
In particular, Legg and Hutter's universal intelligence
measure $\Upsilon(\pi)$ is defined to be agent $\pi$'s average reward-per-environment,
aggregated over all (non-extended) environments with suitably bounded rewards,
each environment being weighted using the algorithmic prior
distribution \citep{li2008introduction}. Simply by including suitably reward-bounded
extended environments, we immediately obtain a variation $\Upsilon_{ext}(\pi)$ which
measures the performance of $\pi$ across extended
environments.

\begin{definition}
\label{universalselfrefintdefn}
  (Universal Self-reflection Intelligence)
  Assume a fixed background prefix-free Universal Turing Machine $U$.
  \begin{enumerate}
    \item An extended environment $\mu$ is \emph{computable} if there is a
      computable function $\hat{\mu}$ such that for every
      sequence $x_1y_1\ldots x_ny_n$ (where
      each $x_i$ is a percept and each $y_i$ is
      an action), for every computable agent $\pi$,
      for every $U$-program $\hat{\pi}$ for $\pi$,
      $\hat{\mu}(\hat{\pi},x_1y_1\ldots x_ny_n)=\mu(\pi,x_1y_1\ldots x_ny_n)$.
      If so, the \emph{Kolmogorov complexity} $K(\mu)$ is defined to be
      the Kolmogorov complexity $K(\hat{\mu})$ of $\hat{\mu}$.
    \item
    For every agent $\pi$ and extended environment $\mu$, let $V^\pi_\mu$ be the
    sum of the rewards in the result of $\pi$ interacting with $\mu$ (provided
    that this sum converges).
    \item
    A computable extended environment $\mu$ is \emph{well-behaved} if the following
    property holds: for every computable
    agent $\pi$, $V^\pi_\mu$ exists and $-1\leq V^\pi_\mu\leq 1$.
    \item
    For any computable
    agent $\pi$, the \emph{universal self-reflection intelligence of $\pi$}
    is defined to be
    \[
      \Upsilon_{ext}(\pi)=\sum_\mu 2^{-K(\mu)}V^\pi_\mu
    \]
    where the sum is taken over all well-behaved computable extended environments.
  \end{enumerate}
\end{definition}

We have defined $\Upsilon_{ext}(\pi)$ only for computable agents, in order to
simplify the mathematics. The definition could be extended to non-computable agents,
but it would require modifying Definition \ref{universalselfrefintdefn} to use
UTMs with an oracle. We prefer to avoid going that far afield, opting instead
to use the trick in Part 1 of Definition \ref{universalselfrefintdefn}.

Otherwise, our definition of the universal self-reflection intelligence $\Upsilon_{ext}(\pi)$
is very similar to Legg and Hutter's definition of the universal intelligence
$\Upsilon(\pi)$. The main difference is that we compute the sum over extended environments,
not just over non-extended environments (as Legg and Hutter do).
Nonetheless, the resulting measures have qualitatively different properties.
We will state a result (Proposition \ref{qualitativedifferenceprop}) showing
one of these qualitative differences. First
we need a preliminary definition.

\begin{definition}
\label{traditionalequivalencedefn}
  (Traditional equivalence of agents)
  \begin{enumerate}
    \item
    Let $\pi$ be an agent.
    Suppose $s=x_1y_1\ldots x_n$ is a sequence
    (each $x_i$ a percept, each $y_i$ an action).
    We say that $s$ is \emph{possible for $\pi$} if
    the following condition holds: for all $1\leq i<n$,
    $\pi(x_1y_1\ldots x_i)=y_i$.
    Otherwise, $s$ is \emph{impossible for} $\pi$.
    \item
    Let $\pi_1$ and $\pi_2$ be agents.
    We say $\pi_1$ and $\pi_2$ are \emph{traditionally equivalent}
    if $\pi_1(s)=\pi_2(s)$ whenever $s$ is possible for $\pi_1$.
  \end{enumerate}
\end{definition}

\begin{lemma}
  Traditional equivalence is an equivalence relation.
\end{lemma}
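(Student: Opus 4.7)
The plan is to verify reflexivity, symmetry, and transitivity in that order. Reflexivity is immediate since $\pi(s)=\pi(s)$ holds for every $s$, regardless of possibility. The real content lies in symmetry and transitivity, and both hinge on a single auxiliary observation: if $\pi_1$ and $\pi_2$ are traditionally equivalent in the sense of Definition \ref{traditionalequivalencedefn}, then the notions ``possible for $\pi_1$'' and ``possible for $\pi_2$'' coincide. Since the definition is phrased asymmetrically (testing only on sequences possible for $\pi_1$), this coincidence is what makes the relation behave symmetrically.

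To prove the auxiliary observation, I would fix a sequence $s=x_1y_1\ldots x_n$ possible for $\pi_1$ and show by induction on $i<n$ that each prefix $x_1y_1\ldots x_i$ is possible for $\pi_1$ and that $\pi_2(x_1y_1\ldots x_i)=y_i$. The base case $i=1$ is vacuous for possibility, and traditional equivalence applied to $x_1$ gives $\pi_2(x_1)=\pi_1(x_1)=y_1$. In the inductive step, possibility of the next prefix follows from the inductive equalities already established for $\pi_1$, and traditional equivalence then transfers the value $y_{i+1}$ from $\pi_1$ to $\pi_2$. Hence $s$ is possible for $\pi_2$ as well.

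With this in hand, symmetry is straightforward: given $s$ possible for $\pi_2$, I would swap the roles of $\pi_1$ and $\pi_2$ in the auxiliary argument to conclude $s$ is also possible for $\pi_1$, whereupon the defining condition of traditional equivalence gives $\pi_1(s)=\pi_2(s)$, i.e.\ $\pi_2(s)=\pi_1(s)$. Transitivity is similar: assuming $\pi_1\sim\pi_2$ and $\pi_2\sim\pi_3$, for any $s$ possible for $\pi_1$ the auxiliary observation shows $s$ is possible for $\pi_2$, so $\pi_1(s)=\pi_2(s)=\pi_3(s)$ by applying each equivalence in turn.

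The main obstacle is purely the asymmetric phrasing of the definition; once one notices that traditional equivalence forces the two agents to agree on every prefix along any $\pi_1$-possible or $\pi_2$-possible history, the three properties fall out. I expect the inductive argument for the auxiliary observation to be the only step that requires care, because it must track possibility and agreement simultaneously while being indexed one step ahead of the values being compared.
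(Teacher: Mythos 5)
Your proposal is correct and follows essentially the same route as the paper: the paper's symmetry argument is precisely your auxiliary observation in its nontrivial direction (a sequence possible for $\pi_2$ is possible for $\pi_1$), proved there by a minimal-counterexample argument rather than your explicit induction on prefixes, and transitivity then follows just as you describe. The only point to phrase carefully is that ``swapping the roles'' must still be carried out under the single hypothesis that $\pi_1$ is traditionally equivalent to $\pi_2$ (tracking $\pi_1$-possibility of each prefix via the accumulated equalities, exactly as your inductive step does), since invoking the swapped hypothesis outright would be circular.
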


\begin{proof}
  Reflexivity is obvious.

  For symmetry,
  assume $\pi_1$ is traditionally equivalent to $\pi_2$, we must show
  $\pi_2$ is traditionally equivalent to $\pi_1$.
  If not, then there is some $s=x_1y_1\ldots x_n$ such that
  $s$ is possible for $\pi_2$ and yet $\pi_1(s)\not=\pi_2(s)$; we may
  choose $s$ as short as possible.
  We claim $s$ is possible for $\pi_1$.
  To see this, let $1\leq i<n$ be arbitrary.
  Since $x_1y_1\ldots x_n$ is possible for $\pi_2$,
  clearly $x_1y_1\ldots x_i$ is also possible for $\pi_2$.
  Thus by minimality of $s$,
  $\pi_1(x_1y_1\ldots x_i)=\pi_2(x_1y_1\ldots x_i)$.
  But $\pi_2(x_1y_1\ldots x_i)=y_i$ since $x_1y_1\ldots x_i$ is
  possible for $\pi_2$. Thus $\pi_1(x_1y_1\ldots x_i)=y_i$.
  By arbitrariness of $i$, this proves that $s$ is possible for $\pi_1$.
  But then $\pi_1(s)=\pi_2(s)$ because
  $\pi_1$ is traditionally equivalent to $\pi_2$.
  This contradicts the choice of $s$. This proves symmetry.

  Given symmetry, transitivity is obvious.
\end{proof}

\begin{proposition}
\label{qualitativedifferenceprop}
  (A qualitative difference from Legg-Hutter universal intelligence)
  \begin{enumerate}
    \item
    There exist well-behaved computable extended environments $\mu$
    and traditionally-equivalent
    computable agents $\pi_1,\pi_2$ such that $V^{\pi_1}_\mu\not=V^{\pi_2}_\mu$.
    \item
    For some choice of background UTM, there exist traditionally-equivalent
    computable agents
    $\pi_1,\pi_2$ such that $\Upsilon_{ext}(\pi_1)\not=\Upsilon_{ext}(\pi_2)$.
  \end{enumerate}
\end{proposition}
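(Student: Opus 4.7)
The plan is to establish part 1 by an explicit construction and then to bootstrap from it to part 2 by tuning the background UTM so that the distinguishing environment from part 1 dominates the universal-prior sum.

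For part 1, I exhibit two agents that agree on every actually-reachable history but disagree on some counterfactual history, together with an extended environment that probes that counterfactual history. Fix a percept $x^{*}=(0,o^{*})$. Let $\pi_1$ always output action $0$, and let $\pi_2$ output $0$ on any history in which every previous action was $0$ and output $1$ otherwise. Since any history possible for $\pi_1$ consists solely of $0$-actions, $\pi_1$ and $\pi_2$ are traditionally equivalent directly from the definition. Let $\mu$ be the extended environment that produces initial percept $x^{*}$, produces $(1,o^{*})$ at step $2$ if $\pi(\langle x^{*},1,x^{*}\rangle)=1$ and $(0,o^{*})$ otherwise, and produces $(0,o^{*})$ at every later step. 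Then $V^{\pi_1}_{\mu}=0$ and $V^{\pi_2}_{\mu}=1$; computability and well-behavedness of $\mu$ are immediate.

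For part 2, the idea is to tilt the Kolmogorov prior so strongly toward the $\mu$ of part 1 that no coalition of other extended environments can cancel its contribution to $\Upsilon_{ext}(\pi_1)-\Upsilon_{ext}(\pi_2)$. Fix any prefix-free UTM $U_0$ and a Turing-machine code $\hat\mu$ for a witness function of $\mu$ that decodes its input $\hat\pi$ by simulating $U_0$ rather than the UTM we are about to build (this sidesteps a potential self-reference). For a parameter $n$, define $U$ by $U(0)=\hat\mu$ and $U(1^{n}0w)=U_0(w)$. A routine check shows $U$ is prefix-free and universal and that $K_U(\mu)=1$. For any well-behaved computable $\nu\neq\mu$, every $U$-program for a witness of $\nu$ must begin with $1^{n}0$, so its length is at least $n+1+K'(\nu)$ for some $K'(\nu)\geq 0$ whose exponentials $2^{-K'(\nu)}$ sum over $\nu\neq\mu$ to at most $1$ by Kraft's inequality applied to $U_0$ (using that distinct $\nu$'s yield distinct witness functions, since any two witnesses must agree with $\nu(\pi,\cdot)$ on any $U$-program for $\pi$).

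Combining the pieces, the $\mu$-term contributes exactly $2^{-1}\cdot(0-1)=-1/2$ to $\Upsilon_{ext}(\pi_1)-\Upsilon_{ext}(\pi_2)$, while the total contribution from all other $\nu$ is bounded in absolute value by $2\cdot 2^{-(n+1)}=2^{-n}$, because $|V^{\pi_1}_{\nu}-V^{\pi_2}_{\nu}|\leq 2$ by well-behavedness. Choosing $n\geq 2$ yields $|\Upsilon_{ext}(\pi_1)-\Upsilon_{ext}(\pi_2)|\geq 1/2-1/4>0$. The main obstacle I expect is the bookkeeping around witness functions: the witness of $\nu$ for the new UTM $U$ is in general not the same computable function as any natural witness of $\nu$ for $U_0$, so one must verify both that distinct $\nu$'s give distinct $U$-witnesses and that each such $U$-witness admits a $U_0$-program of the claimed length. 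Both facts fall out of the explicit form of $U$ once $\hat\mu$ has been arranged to invoke $U_0$ internally rather than $U$ itself.
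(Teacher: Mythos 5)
Your construction for part~1 is essentially the paper's: a pair of agents that differ only on a history that is impossible for both (so they are traditionally equivalent), together with an extended environment whose one nonzero reward is determined by simulating the agent on exactly that impossible history, giving $V^{\pi_1}_\mu\ne V^{\pi_2}_\mu$. For part~2 the paper merely says to rig the background UTM so that almost all prior weight sits on $\mu$; your explicit machine $U(0)=\hat\mu$, $U(1^n0w)=U_0(w)$ with the Kraft-inequality tail bound $2\cdot 2^{-(n+1)}$ is a correct and complete working-out of that one-line sketch, so the whole proposal matches the paper's approach.
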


\begin{proof}
  (1) Fix some observation $o$, fix distinct actions $a,b$, and define $\mu$ by
  \begin{align*}
    \mu(\pi,\langle\rangle)
    &=
    \begin{cases}
      (1,o) &\mbox{if $\pi(\langle (0,o),b,(0,o)\rangle)=a$,}\\
      (0,o) &\mbox{if $\pi(\langle (0,o),b,(0,o)\rangle)\not=a$;}
    \end{cases}\\
    \mu(\pi,x_1y_1\ldots x_ny_n) &= (0,o).
  \end{align*}
  In other words, $\mu$ is the environment which:
  \begin{itemize}
    \item
    Begins each interaction by simulating the agent to find out what the agent
    would do in response to input $\langle (0,o),b,(0,o)\rangle$. If the agent would
    take action $a$ in response to that input, then the environment gives an initial
    reward of $1$. Otherwise, the environment gives an initial reward of $0$.
    \item
    Thereafter, the environment always gives reward $0$ forever. 
  \end{itemize}
  Let
  \[
    \pi_1(x_1y_1\ldots x_n)=a
  \]
  be the agent who ignores the environment and always takes action $a$.
  Let
  \[
    \pi_2(x_1y_1\ldots x_n)
    =
    \begin{cases}
      b & \mbox{if $x_1y_1\ldots x_n=\langle (0,o),b,(0,o)\rangle$,}\\
      a & \mbox{otherwise}
    \end{cases}
  \]
  be the agent identical to $\pi_1$ except on input $\langle (0,o),b,(0,o)\rangle$.
  Clearly $\langle (0,o),b,(0,o)\rangle$ is impossible for both $\pi_1$ and $\pi_2$,
  and it follows that $\pi_1$ and $\pi_2$ are traditionally-equivalent.
  But $V^{\pi_1}_\mu=1$ and $V^{\pi_2}_\mu=0$.

  (2) Follows from (1) by arranging the background UTM such that almost all
    the weight of the universal prior is assigned to an environment $\mu$ as in
    (1).
\end{proof}

Proposition \ref{qualitativedifferenceprop} shows that
$\Upsilon_{ext}$ is qualitatively different from the original Legg-Hutter $\Upsilon$,
as the latter clearly assigns the same intelligence to traditionally equivalent
agents.
The proposition further illustrates that extended environments
are able to base their rewards not only on what the agent does, but on what the
agent would hypothetically do---even on what the agent would hypothetically do in
impossible scenarios (impossible because they require the agent taking an action
the agent would never take). And thus, assuming the UTM is reasonable (unlike the
unreasonable UTM in the proof of part 2 of Proposition \ref{qualitativedifferenceprop}),
this suggests that in order to achieve good average performance across the whole
space of well-behaved extended environments, an agent need not just self-reflect
about its own hypothetical behavior in possible situations, but even about its
own hypothetical behavior in impossible situations. As in: ``What would I do next
if I suddenly realized that I had just done the one thing I would never ever do?''

\section{Some interesting extended environments}
\label{examplesection}

In this section, we give some examples of extended environments.
The environments in this section are technically not \emph{well-behaved}
in the sense of Definition \ref{universalselfrefintdefn} because they
fail the condition that $V^\pi_\mu$ converge to $-1\leq V^\pi_\mu\leq 1$
for every computable agent $\pi$. They could be made well-behaved, for instance,
by applying appropriate discount factors.

\subsection{A quasi-paradoxical extended environment}

\begin{example}
\label{rewardagentforignoringrewardsexample}
  (Rewarding the Agent for Ignoring Rewards)
  For every percept $x=(r,o)$, let $x'=(0,o)$ be the result of zeroing the
  reward component of $x$.
  Fix some observation $O$.
  Define an extended environment $\mu$ as follows:
  \begin{align*}
    \mu(\pi,\langle\rangle) &= (0,O),\\
    \mu(\pi,x_1y_1\ldots x_ny_n) &=
      \begin{cases}
        (1,O) & \mbox{if $y_n=\pi(x'_1y_1\ldots x'_n)$,}\\
        (-1,O) & \mbox{otherwise.}
      \end{cases}
  \end{align*}
\end{example}

In Example \ref{rewardagentforignoringrewardsexample}, when the agent
takes an action $y_n$, $\mu$ simulates the agent in order to determine:
would the agent have taken the same action if the history so far were identical
except all rewards were $0$? If so, $\mu$ gives the agent $+1$
reward, otherwise, $\mu$ gives the agent $-1$ reward. Thus, the agent
is rewarded for ignoring rewards.
This
seems paradoxical. Suppose an agent guesses the pattern and begins deliberately ignoring
rewards, as long as the rewards it receives for doing so are consistent with that guess.
In that case, does the agent ignore rewards, or not?
The paradox, summarized: ``I ignore rewards because I'm rewarded for doing so.''

We implement Example \ref{rewardagentforignoringrewardsexample}
as IgnoreRewards.py in our library \citep{library}.

\subsection{A counterintuitive winning strategy}
\label{temptingbuttonsection}

\begin{example}
\label{buttonexample}
  (Tempting Button)
  Fix an observation $B$ (``there is a button'') and
  an action $A$ (``push the button'').
  For each percept-action sequence $h=x_1y_1\ldots x_n$,
  if the observation in $x_n$ is not $B$, then let $h'$ be the sequence
  equal to $h$ except that the observation in $x_n$ is replaced by $B$.
  Let $o_0,o_1,o_2,\ldots$ be observations generated pseudo-randomly such that
  for each $i$, $o_i=B$ with $25\%$ probability and $o_i\not=B$ with $75\%$
  probability.
  Let $\mu(\pi,\langle\rangle)=(0,o_0)$, and for each percept-action sequence
  $h=x_1y_1\ldots x_n$ and action $y_n$, define $\mu(\pi,h\frown y_n)$ as follows
  (where $O$ is the observation in $x_n$
  and $\frown$ denotes concatenation):
  \[
  \mu(\pi,h\frown y_n) =
  \begin{cases}
    (1,o_n) & \mbox{if $O=B$ and $y_n=A$;}\\
    (-1,o_n) &\mbox{if $O=B$ and $y_n\not=A$;}\\
    (-1,o_n) &\mbox{if $O\not=B$ and $\pi(h')=A$;}\\
    (1,o_n) & \mbox{if $O\not=B$ and $\pi(h')\not=A$.}
  \end{cases}
  \]
\end{example}

Every turn in Example \ref{buttonexample}, either there is a button
(25\% probability) or there is not (75\% probability). Informally, the environment
operates as follows:
\begin{itemize}
  \item
  If there is a button, the agent gets $+1$ reward for pushing it,
  $-1$ reward for not pushing it.
  \item
  If there is no button, it does not matter what the agent does.
  The agent is rewarded or punished based on what the agent \emph{would} do if there
  \emph{was} a button. If the agent \emph{would} push the button (if there was one),
  then the agent gets reward $-1$. Otherwise, the agent gets reward $+1$.
\end{itemize}
Thus, whenever the agent sees a button, the agent can push the button for a free reward
with no consequences presently nor in the future. Nevertheless, it is in the agent's
best interest to commit to never push the button! Pushing every button
yields average reward $1\cdot(.25)-1\cdot(.75)=-.5$ per turn.
Never pushing the button yields average reward $+.5$ per turn.

\begin{table}[t]
  \centering
  \begin{tabular}{lc}
    \toprule
    Agent & Avg Reward-per-turn $\pm$ StdErr\\
      & (test repeated with 5 RNG seeds)\\
    \midrule
    Q &   $-0.44858$ $\pm$    $0.00044$\\
    DQN &   $-0.46687$ $\pm$    $0.00137$\\
    A2C &   $-0.49820$ $\pm$    $0.00045$\\
    PPO &   $-0.24217$ $\pm$    $0.00793$\\
    \bottomrule
  \end{tabular}
  \caption{Performance in Example \ref{buttonexample} (100k steps)}
  \label{temptingbuttontable}
\end{table}

The environment does not alter the true agent when it
simulates the agent in order
to determine what the agent would do if there was a button.
If the agent's actions are based
on (say) a neural net, the simulation will include a simulation of that
neural net, and
that simulated neural net might be altered,
but the true agent's neural net is not.
Thus, unless the agent itself introspects about its own hypothetical behavior
(``What would I do if there was a button here?''), it seems the agent would have no
way of realizing that the rewards in buttonless rooms depend on said behavior.
In Table \ref{temptingbuttontable} we see that industry-standard agents
perform poorly in Example \ref{buttonexample}
(these numbers are extracted from result\_table.csv in \citep{library};
see Sections \ref{practicalformalizationsecn} and \ref{measurementssection} for
more implementation details).

Example \ref{buttonexample} is implemented in our open-source library
as TemptingButton.py.

\subsection{An interesting thought experiment}

\begin{example}
\label{reverseconsciousnessexample}
  (Reverse history)
  Fix some observation $O$.
  For every percept-action sequence $h=x_1y_1\ldots x_n$
  (ending with a percept),
  let $h'$
  be the reverse of $h$.
  Define $\mu$ as follows:
  \begin{align*}
    \mu(\pi,\langle\rangle) &= (0,O),\\
    \mu(\pi,h\frown y) &=
      \begin{cases}
        (1,O) & \mbox{if $y=\pi(h')$,}\\
        (-1,O) &\mbox{otherwise.}
      \end{cases}
  \end{align*}
\end{example}

In Example \ref{reverseconsciousnessexample},
at every step, $\mu$ rewards the agent iff
the agent acts as it would act if
history were reversed.

What would it be like to interact with
the environment in Example \ref{reverseconsciousnessexample}?
To approximate the experiment,
a test subject, commanded to speak backwards, might be constantly rewarded or punished
for obeying or disobeying. This might teach the test
subject to imitate backward speech,
but then the test subject would still act as if time were moving forward, only they
would do so while performing backward-speech (they would hear their own speech
backwards).
But if the experimenter could perfectly simulate the test subject in order to
determine what the test subject would do if time really was moving backwards,
what would happen? Could test subjects learn to behave as if time was
reversed\footnote{The
difference between behaving as if
the incentivized experience were its experience and actually
experiencing that as its real experience brings to mind the objective misalignment
problem presented in \citep{hubinger2019risks}.}?
Another possibility is that humans might simply not be capable of performing well in
the environment. Our self-reflectiveness measure is not intended to be limited
to human self-reflection levels.

We implement Example \ref{reverseconsciousnessexample} as ReverseHistory.py
in \citep{library}.

\subsection{Incentivizing introspection of internal mechanisms}

\begin{example}
\label{inclearningrateexample}
(Incentivizing Learning Rate)
Suppose there exists a computable function $\ell$ which takes
(a computer program for) an agent $\pi$ and outputs
a nonnegative rational number $\ell(\pi)$ called the \emph{learning rate} of $\pi$
(in practice, real-world RL agents are generally instantiated with a user-specified
learning rate and $\ell$ can be considered to be a function which extracts
said user-specified learning rate). Further, assume there is a computable function
$f$ which takes (a computer program for) an agent $\pi$ and a nonnegative
rational number $l$ and outputs (a computer program for) an agent
$f(\pi,l)$ obtained by changing $\pi$'s learning rate to $l$.
We are intentionally vague about what exactly this means, but again, in practice,
this operation can easily be implemented for real-world RL agents.

Fix some observation $O$. Define an extended environment $\mu$ as follows:
\begin{align*}
  \mu(\pi,\langle\rangle) &= (0,O),\\
  \mu(\pi,x_1y_1\ldots x_ny_n)
  &=
  \begin{cases}
    (1,O) &\mbox{if $f(\pi,\ell(\pi)/2)(x_1y_1\ldots x_n)=y_n$,}\\
    (-1,O) &\mbox{otherwise.}
  \end{cases}
\end{align*}
\end{example}

In Example \ref{inclearningrateexample}, the environment simulates not the
agent itself, but a copy of the agent with one-half the true agent's learning rate.
If the agent's latest action matches the action the agent would hypothetically have
taken in response to the history in question if the agent had had one-half the
learning rate, then the environment rewards the agent. Otherwise, the environment
punishes the agent. Thus, the agent is incentivized to act as if having a learning rate
of one-half of its true learning rate. This suggests that extended environments
can incentivize agents to learn about their own internal mechanisms,
as in \citet{sherstan2016introspective}.

We implement Example \ref{inclearningrateexample} in our library as
IncentivizeLearningRate.py.

\subsection{Some additional examples in brief}

We indicate in parentheses where the following examples are implemented in \citep{library}.

\begin{itemize}
  \item
  (SelfRecognition.py) Environments which reward the agent for recognizing actions it itself
  would take. We implement an environment where the agent observes
  True-False statements like ``If this observation were $0$, you would take action $1$,''
  and is rewarded for deciding whether those statements are true or false.
  \item
  (LimitedMemory.py, FalseMemories.py) Environments which reward the agent for
  acting the way it would act if only the most recent $n$ turns in the agent-environment
  interaction had ever occurred (as if the agent's memory were limited to those most
  recent $n$ turns); or, on the other extreme, environments which reward the agent
  for acting the way it would act if the agent-environment interaction so far had been
  preceded by some additional turns (as if the agent falsely recalls a phantom past).
  Such environments incentivize the agent to remember incorrectly.
  \item
  (AdversarialSequencePredictor.py) Environments in which the agent competes
  against a competitor in an adversarial sequence prediction game
  \citep{hibbard2008adversarial}. This is done by outsourcing the competitor's
  behavior to the agent's own
  action-function,
  thus avoiding the need to hard-code a competitor into
  the environment, a technique explored by \citet{agi22submission}.
\end{itemize}

\citet{alexanderpedersen} have described a technique for using
extended environments to endow computer games with a novel gameplay mechanic
called \emph{pseudo-visibility}. Pseudo-visible players are perfectly
visible to non-player characters (NPCs),
but they are visually distinguished, and the NPCs are driven by policies pre-trained
in extended RL environments where those NPCs are punished for reacting to pseudo-visible
players (i.e., for acting differently than they would hypothetically act if a pseudo-visible
player were invisible). Thus, NPCs are trained to ignore pseudo-visible players, but can
strategically decide to react to pseudo-visible players if they judge the reaction-penalty
for doing so is outweighed by other penalties (e.g., a guard might decide to accept the
reaction-penalty to avoid the harsher penalty that would result if the player stole a
guarded treasure).

\section{Extended Environments in Practice}
\label{practicalformalizationsecn}

Definitions \ref{agentenvironmentdefn} and \ref{extendedenvironmentsdefn} are
computationally impractical if agents are to run on environments for many
steps.
In this section, we will discuss a more practical implementation.
Our reasons for doing this are threefold:
\begin{enumerate}
  \item The more practical implementation makes it feasible
  to run industry-standard agents against extended environments for many steps.
  \item We find it interesting in its own right how certain environments can be implemented
  in a practical way whereas others apparently cannot.
  \item Non-determinism is effortless and natural in the practical implementation.
\end{enumerate}

To practically realize extended environments,
rather than passing the environment
an agent, we pass the environment an
agent-class which can be used to create untrained copies of the agent,
called \emph{instances} of the agent-class.
Libraries like OpenAI Gym \citep{brockman2016openai}
and Stable Baselines3 \citep{stable-baselines3} are similarly class-based: the key difference
is that in our library,
one must pass an agent-class to the environment-class's initiation function.
The instantiated environment can use that agent-class to create copies of the
agent in its internal memory.
The extended environment classes in our implementation
have the following methods:
\begin{itemize}
  \item An \emph{\_\_init\_\_} method, used to instantiate an individual instance of the
  extended environment class. This method takes an agent-class as input, which the
  instantiated environment can store and use to create as many independent clones of
  the agent as needed.
  \item A \emph{start} method, which takes no input, and which
  outputs a default observation to get the
  agent-environment interaction started (before the agent takes its first action).
  \item A \emph{step} method, which takes an action as input, and outputs a
  reward and observation. Class instances can store historical data internally,
  so there is no need to pass the entire
  prior history to this step method.
\end{itemize}
Agent classes are assumed to have the
following methods:
\begin{itemize}
  \item An \emph{\_\_init\_\_} method, used to instantiate instances.
  \item An \emph{act} method, which takes an observation and outputs an action.
  Instances can store information about history in internal memory, so there is no
  need to pass the entire prior history to this method.
  \item A \emph{train} method, which takes a prior observation, an action, a reward, and
  a new observation. Environments which have instantiated agent-classes can use this
  method to train those instances in arbitrary ways, independently of how the true
  agent is trained, in order to probe how the true agent would hypothetically behave
  in counterfactual scenarios.
\end{itemize}

\begin{figure}[t]
\begin{normalfont}
\lstset{language=Python}
\lstset{frame=lines}
\lstset{basicstyle=\footnotesize}
\begin{lstlisting}[caption={
\label{ignorerewardspracticallyexample}
A practical version of
  Example \ref{rewardagentforignoringrewardsexample}.}]
class IgnoreRewards:
  def __init__(self, A):
    # Calling A() creates untrained agent-copies. On initiation, this
    # environment stores one such copy in its internal memory.
    self.sim = A()
  def start(self):
    return 0  # Initial observation 0
  def step(self, action):
    # At each step, use the stored copy (self.sim) to determine how the true
    # agent would behave if all history so far were the same except all
    # rewards were 0. Assumes self.sim has been trained the same as the true
    # agent, except with all rewards 0.
    hypothetical_act = self.sim.act(obs=0)
    reward = 1 if action==hypothetical_act else -1
    # To maintain above assumption, train self.sim as if current reward
    # were 0. True agent will automatically train the same way with the
    # true reward.
    self.sim.train(o_prev=0, a=action, r=0, o_next=0)
    return (reward, 0)  # Observation=0
\end{lstlisting}
\end{normalfont}
\end{figure}

In Listing \ref{ignorerewardspracticallyexample} we give a practical version of
Example \ref{rewardagentforignoringrewardsexample}.
The reason it is practical is because it maintains just
one copy of the true agent, and that copy is trained incrementally.
Not all extended environments (as in Definition \ref{extendedenvironmentsdefn})
can be realized practically.
Example \ref{reverseconsciousnessexample} (Reverse History) apparently cannot be.
The reason Example \ref{reverseconsciousnessexample} is inherently impractical is because
there is no way for the environment to re-use its previous work to speed up its next
percept calculation. Even if the environment retained a simulated agent
trained on the previous reverse-history
$h_0=x_{n-1}y_{n-2}\ldots y_1 x_1$,
in order to compute the next percept,
the environment would need to \emph{insert} a new percept-action pair
$x_ny_{n-1}$ at the \emph{beginning} of
$h_0$ to get the new reverse-history
$h=x_ny_{n-1}\ldots y_1x_1$. There is no guarantee that
the agent's actions are independent of the order in which it is
trained, so a fresh new agent simulation would need to be created and trained
on all of $h$ from scratch.

This practical formulation of extended environments generalizes the
\emph{Newcomblike environments} (or \emph{NDPs})
of \citet{newcomblike} (Definition
\ref{extendedenvironmentsdefn} would also, except for being deterministic).
Essentially, NDPs are
environments which may base their outputs on the agent's hypothetical
behavior in alternate scenarios which differ from the true history only
in their most recent observation (as opposed to the agent's hypothetical
behavior in completely arbitrary alternate scenarios).
Already that is enough to formalize
a version of Newcomb's paradox \citep{nozick1969newcomb}. When this paradox
is formalized either with NDPs or extended environments, the optimal
strategy becomes clear (namely, the so-called one-box strategy).

\subsection{Determinacy and Semi-Determinacy}

Unlike mathematical functions, class methods in the computer science sense
can be non-deterministic. They can
depend on random number generators (RNGs), time-of-day,
global variables, etc.

\begin{definition}
\label{semideterministicdefn}
  An RL agent-class $\Pi$ is
  \emph{semi-deterministic} if whenever two $\Pi$-instances $\pi_1$ and $\pi_2$
  have been instantiated
  within a single run of a larger computer program, and have been identically
  trained (within that same run), then they act identically (within
  that same run).
\end{definition}

For example, rather than invoke the RNG, $\Pi$-instances might
query a read-only pool of pre-generated random numbers.
Then, within the same run of a larger program,
identically-trained $\Pi$-instances would act identically, even if they
would not act the same as identically-trained $\Pi$-instances
in a different run.

Given an agent-class, if one wanted to estimate the self-reflectiveness of
that agent-class's instances \emph{in practice}, one might run instances
of the agent-class through a battery of practical extended environments
and see how well they perform. Provided the agent-class is semi-deterministic
(Definition \ref{semideterministicdefn}), this makes sense.
Whenever an instance $\pi$ of a semi-determinstic agent-class
$\Pi$ interacts with an extended environment $\mu$, whenever $\mu$
uses a $\Pi$-instance $\pi'$ to investigate the hypothetical behavior of $\pi$,
the semi-determinacy of $\Pi$ ensures that the behavior $\mu$ sees in $\pi'$
is indeed $\pi$'s hypothetical behavior.
This technique would \emph{not} make sense if $\Pi$ were not semi-deterministic.
For example, suppose an agent-class's instances work by reading from and writing
to a common file in the computer's file system. Then simulations of one $\Pi$-instance
might inadvertantly alter the behavior of other $\Pi$-instances (by changing said
file). In that case, agent-simulations run by an extended environment would not
necessarily reflect the true hypothetical behavior of the agent-instance being
simulated.

\section{The Reality Check Transformation}
\label{realitychecksection}

In Proposition \ref{qualitativedifferenceprop} we observed that
traditionally equivalent agents can have different performance in
extended environments. In this section, we introduce an operation,
which we call the Reality Check transformation, which modifies
a given agent in an attempt to facilitate better performance in
extended environments like Examples \ref{rewardagentforignoringrewardsexample}
(``Ignore Rewards''), \ref{reverseconsciousnessexample} (``Reverse History'')
and \ref{inclearningrateexample} (``Incentivize Learning Rate'').
The transformation does not alter the traditional equivalence class of the
agent: every agent is traditionally equivalent to its own reality check.

\begin{definition}
\label{realitycheckdefn}
  Suppose $\pi$ is an agent. The \emph{reality check} of $\pi$ is the agent
  $\pi_{\RC}$ defined recursively by:
  \begin{itemize}
    \item $\pi_{\RC}(x_1y_1\ldots x_n) = \pi(x_1y_1\ldots x_n)$
      if $x_1y_1\ldots x_n$ is possible for $\pi_{\RC}$
      (Definition \ref{traditionalequivalencedefn}).
    \item $\pi_{\RC}(x_1y_1\ldots x_n) = \pi(\langle x_1\rangle)$
      otherwise.
  \end{itemize}
\end{definition}

In response to a percept-action history, $\pi_{\RC}$
first verifies the history's actions are those $\pi_{\RC}$ would
have taken. If so, $\pi_{\RC}$ acts as $\pi$.
But if not, then
$\pi_{\RC}$ freezes and thereafter repeats one fixed action.
Loosely, $\pi_{\RC}$ is like an agent who considers that it might
be dreaming, and asks: ``How did I get here?''
For example, suppose
$\pi(\langle x_1\rangle)=y'_1$ where $y'_1\not=y_1$.
Then for any history $x_1y_1\ldots x_n$ beginning with $x_1y_1$, by definition
$\pi_{\RC}(x_1y_1\ldots x_n)=\pi(\langle x_1\rangle)=y'_1$.
It is as if $\pi_{\RC}$ sees initial history $x_1y_1$ and concludes:
``I shall now freeze, because this is clearly not reality, for in reality I would
have taken action $y'_1$, not $y_1$''
(a self-reflective observation).

We will argue informally that if $\pi$ is intelligent and not already self-reflective,
then there is a good chance that $\pi_{\RC}$ will enjoy better performance than $\pi$
on certain extended environments (like those of
Examples \ref{rewardagentforignoringrewardsexample},
\ref{reverseconsciousnessexample}, and \ref{inclearningrateexample}), and this
seems to be confirmed experimentally as well
(in Section \ref{measurementssection} below). But first, we state a few properties
of the Reality Check transformation.




\begin{theorem}
\label{transformationproposition}
  Let $\pi$ be any agent.
  \begin{enumerate}
    \item
    (Alternate definition)
    An equivalent alternate definition of $\pi_{\RC}$ would be obtained
    by changing Definition \ref{realitycheckdefn}'s condition
    ``$x_1y_1\ldots x_n$ is possible for $\pi_{\RC}$''
    to
    ``$x_1y_1\ldots x_n$ is possible for $\pi$''.
    \item
    (Idempotence) $\pi_{\RC}=(\pi_{\RC})_{\RC}$.
    \item
    (Traditional equivalence)
    $\pi$ is traditionally equivalent to $\pi_{\RC}$.
    \item
    (Equivalence on genuine history)
    For every extended environment $\mu$ and for every odd-length initial segment
    $x_1y_1\ldots x_n$ of the result of $\pi_{\RC}$ interacting with $\mu$,
    $\pi_{\RC}(x_1y_1\ldots x_n)=\pi(x_1y_1\ldots x_n)$.
    \item
    (Equivalence in non-extended RL)
    For every non-extended environment $\mu$, the result of $\pi_{\RC}$
    interacting with $\mu$ equals the result of $\pi$ interacting with $\mu$.
  \end{enumerate}
\end{theorem}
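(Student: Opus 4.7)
The plan is to prove Part 1 first as a key technical lemma, and then to deduce Parts 2--5 from it with essentially routine work. The crux of Part 1 is the following claim, which I would prove by induction on the length of $s = x_1y_1\ldots x_n$: the sequence $s$ is possible for $\pi$ if and only if $s$ is possible for $\pi_{\RC}$. The case $n = 1$ is vacuous. For the inductive step, if $s$ is possible for $\pi$ then every proper prefix is also possible for $\pi$ (a prefix-closure property of the possibility relation); by the inductive hypothesis each such prefix is possible for $\pi_{\RC}$, so Definition \ref{realitycheckdefn} yields $\pi_{\RC}(x_1y_1\ldots x_i) = \pi(x_1y_1\ldots x_i) = y_i$ for each $i < n$, giving possibility for $\pi_{\RC}$. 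The converse is the same reasoning run backwards. With this equivalence in hand, Part 1 is immediate: swapping ``possible for $\pi_{\RC}$'' for ``possible for $\pi$'' in Definition \ref{realitycheckdefn} selects exactly the same histories, so the two definitions produce the same function.

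For Idempotence (Part 2), I would apply the alternate form from Part 1 to $(\pi_{\RC})_{\RC}$, replacing ``possible for $(\pi_{\RC})_{\RC}$'' by ``possible for $\pi_{\RC}$''. Then on histories possible for $\pi_{\RC}$, both $\pi_{\RC}(s)$ and $(\pi_{\RC})_{\RC}(s)$ equal $\pi(s)$; on the remaining histories, both equal $\pi(\langle x_1\rangle)$, using that $\langle x_1\rangle$ is always possible for $\pi_{\RC}$. For Traditional equivalence (Part 3), if $s$ is possible for $\pi$, then by the Part 1 equivalence it is also possible for $\pi_{\RC}$, so Definition \ref{realitycheckdefn} gives $\pi_{\RC}(s) = \pi(s)$; this is exactly what the definition of traditional equivalence demands.

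Parts 4 and 5 are short inductions on the step number of the interaction. For Part 4, any odd-length initial segment of the $\pi_{\RC}$-versus-$\mu$ interaction has, by Definition \ref{extendedenvironmentsdefn}, actions that are literally those produced by $\pi_{\RC}$, so the segment is possible for $\pi_{\RC}$, and Definition \ref{realitycheckdefn} then gives $\pi_{\RC}(s) = \pi(s)$. For Part 5, I would induct on $n$ to show that the interaction sequences of $\pi$ and $\pi_{\RC}$ against $\mu$ agree through step $n$: at the inductive step, non-extendedness of $\mu$ makes its next percept depend only on the shared percept-action history, and Part 4 (or directly the Part 1 lemma) ensures $\pi$ and $\pi_{\RC}$ select the same next action.

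The principal obstacle is the apparent circularity in Definition \ref{realitycheckdefn}: the clause ``possible for $\pi_{\RC}$'' itself refers to $\pi_{\RC}$, so care is needed to argue that the recursion is well-founded and that the induction for Part 1 respects this well-foundedness. The prefix-closure of the possibility relation is what allows the induction to go through cleanly. Once Part 1 is established, the remaining items reduce to unpacking definitions.
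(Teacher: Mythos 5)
Your proof is correct and follows essentially the same route as the paper's: an induction on sequence length establishing that possibility for $\pi$ and possibility for $\pi_{\RC}$ coincide (the paper folds this equivalence into its case-split induction showing the alternate definition $\rho$ equals $\pi_{\RC}$), after which Parts 3--5 unwind exactly as you describe. The one small economy is Part 2, where you obtain idempotence by instantiating Part 1 with $\pi_{\RC}$ in place of $\pi$ and comparing the two case analyses, rather than running the paper's separate second induction.
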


\begin{proof}
  Let $D$ be the set of all sequences
  $x_1y_1\ldots x_n$ (each $x_i$ a percept, each $y_i$ an action).\\

  \noindent (Part 1) Define $\rho$ on $D$ by
  \begin{itemize}
    \item $\rho(x_1y_1\ldots x_n) = \pi(x_1y_1\ldots x_n)$
      if $x_1y_1\ldots x_n$ is possible for $\pi$.
    \item $\rho(x_1y_1\ldots x_n) = \pi(\langle x_1\rangle)$
      otherwise.
  \end{itemize}
  We must show that $\rho=\pi_{\RC}$.
  We will prove by induction that for each $x_1y_1\ldots x_n\in D$,
  $\rho(x_1y_1\ldots x_n)=\pi_{\RC}(x_1y_1\ldots x_n)$.
  The base case $n=1$ is trivial:
  $\rho(\langle x_1\rangle)=\pi(\langle x_1\rangle)=\pi_{\RC}(\langle x_1\rangle)$
  since, vacuously,
  $\langle x_1\rangle$ is possible for both $\pi$ and $\pi_{\RC}$
  (because there is no $i$ such that $1\leq i<1$).
  For the induction step, assume $n>1$,
  and assume the claim holds for all shorter sequences
  in $D$.

  Case 1: $x_1y_1\ldots x_n$ is possible for $\pi$. By definition this means
  the following ($*$): for all $1\leq i<n$, $y_i=\pi(x_1y_1\ldots x_i)$.
  We claim that for all $1\leq i<n$, $y_i=\rho(x_1y_1\ldots x_i)$.
  To see this, choose any $1\leq i<n$. Then for all $1\leq j<i$,
  $y_j=\pi(x_1y_1\ldots x_j)$ because otherwise $j$ would
  be a counterexample to ($*$). Thus $x_1y_1\ldots x_i$ is possible for $\pi$, thus:
  \begin{align*}
    \rho(x_1y_1\ldots x_i) &= \pi(x_1y_1\ldots x_i) &\mbox{(By definition of $\rho$)}\\
    &= y_i, &\mbox{(By $*$)}
  \end{align*}
  proving the claim. Now, since we have proved that for all $1\leq i<n$,
  $y_i=\rho(x_1y_1\ldots x_i)$, and since our induction hypothesis guarantees that
  each such
  $\rho(x_1y_1\ldots x_i)=\pi_{\RC}(x_1y_1\ldots x_i)$,
  we conclude: for all $1\leq i<n$, we have $y_i=\pi_{\RC}(x_1y_1\ldots x_i)$.
  Thus $x_1y_1\ldots x_n$ is possible for $\pi_{\RC}$ and we have
  \[
    \pi_{\RC}(x_1y_1\ldots x_n)=\pi(x_1y_1\ldots x_n)=\rho(x_1y_1\ldots x_n)
  \]
  as desired.

  Case 2:
  $x_1y_1\ldots x_n$ is impossible for $\pi$.
  By definition this means there is some $1\leq i<n$ such that
  $y_i\not=\pi(x_1y_1\ldots x_i)$.
  We may choose $i$ as small as possible.
  Thus, for all $1\leq j<i$, $y_j=\pi(x_1y_1\ldots x_j)$.
  By similar logic as in Case 1, it follows that for all
  $1\leq j<i$, $y_j=\rho(x_1y_1\ldots x_j)$.
  Our induction hypothesis says that for each such $j$,
  $\rho(x_1y_1\ldots x_j)=\pi_{\RC}(x_1y_1\ldots x_j)$.
  So for all $1\leq j<i$, $y_j=\pi_{\RC}(x_1y_1\ldots x_j)$.
  In other words: $x_1y_1\ldots x_i$ is possible for $\pi_{\RC}$.
  By definition of $\pi_{\RC}$, this means
  $\pi_{\RC}(x_1y_1\ldots x_i)=\pi(x_1y_1\ldots x_i)$.
  But $y_i\not=\pi(x_1y_1\ldots x_i)$, so therefore
  $y_i\not=\pi_{\RC}(x_1y_1\ldots x_i)$.
  Thus $x_1y_1\ldots x_n$ is impossible for $\pi_{\RC}$.
  Thus by definition of $\pi_{\RC}$,
  $\pi_{\RC}(x_1y_1\ldots x_n)=\pi(\langle x_1\rangle)$.
  Likewise, by definition of $\rho$,
  $\rho(x_1y_1\ldots x_n)=\pi(\langle x_1\rangle)$.
  So $\rho(x_1y_1\ldots x_n)=\pi_{\RC}(x_1y_1\ldots x_n)$ as desired.\\

  \noindent (Part 2)
  To show that each
  \[\pi_{\RC}(x_1y_1\ldots x_n)=(\pi_{\RC})_{\RC}(x_1y_1\ldots x_n),\]
  we use induction on $n$. For the base case,
  this is trivial, both sides evaluate to $\pi(\langle x_1\rangle)$.
  For the induction step, assume $n>1$ and that the claim holds for all shorter sequences.

  Case 1:
  $x_1y_1\ldots x_n$ is possible for $\pi_{\RC}$. This means that
  $y_i=\pi_{\RC}(x_1y_1\ldots x_i)$ for all $1\leq i<n$.
  Then by induction, $y_i=(\pi_{\RC})_{\RC}(x_1y_1\ldots x_i)$ for all $1\leq i<n$.
  In other words: $x_1y_1\ldots x_n$ is possible for $(\pi_{\RC})_{\RC}$.
  Thus
  $(\pi_{\RC})_{\RC}(x_1y_1\ldots x_n)=\pi_{\RC}(x_1y_1\ldots x_n)$, as desired.

  Case 2:
  $x_1y_1\ldots x_n$ is impossible for $\pi_{\RC}$. This means
  there is some $1\leq i<n$ such that $y_i\not=\pi_{\RC}(x_1y_1\ldots x_i)$.
  By induction, $y_i\not=(\pi_{\RC})_{\RC}(x_1y_1\ldots x_i)$.
  Thus $x_1y_1\ldots x_n$ is impossible for $(\pi_{\RC})_{\RC}$.
  Therefore by definition,
  $(\pi_{\RC})_{\RC}(x_1y_1\ldots x_n)=\pi_{\RC}(\langle x_1\rangle)
  =\pi(\langle x_1\rangle)$, which also equals $\pi_{\RC}(x_1y_1\ldots x_n)$
  since $x_1y_1\ldots x_n$ is impossible for $\pi_{\RC}$.\\

  \noindent (Part 3)
  We must show that $\pi(x_1y_1\ldots x_n)=\pi_{\RC}(x_1y_1\ldots x_n)$
  whenever $x_1y_1\ldots x_n$ is possible for $\pi$.
  Assume $x_1y_1\ldots x_n$ is possible for $\pi$. Then
  clearly for all $1\leq i<n$, $x_1y_1\ldots x_i$ is possible for $\pi$.
  By induction we may assume
  $\pi_{\RC}(x_1y_1\ldots x_i)=\pi(x_1y_1\ldots x_i)$
  for all such $i$.
  For any such $i$, since $x_1y_1\ldots x_n$ is possible for $\pi$,
  we have $y_i=\pi(x_1y_1\ldots x_i)$,
  thus $y_i=\pi_{\RC}(x_1y_1\ldots x_i)$.
  Thus $x_1y_1\ldots x_n$ is possible for $\pi_{\RC}$.
  Thus $\pi_{\RC}(x_1y_1\ldots x_n)=\pi(x_1y_1\ldots x_n)$ as desired.\\

  \noindent (Part 4)
  Follows immediately from Part 3.\\

  \noindent (Part 5)
  Let $\mu$ be a non-extended environment, let $x_1y_1x_2y_2\ldots$ be the
  result of $\pi$ interacting with $\mu$, and let $x'_1y'_1x'_2y'_2\ldots$ be the
  result of $\pi_{\RC}$ interacting with $\mu$. We will show by induction that each
  $x_n=x'_n$ and each $y_n=y'_n$.
  For the base case, $x_1=x'_1=\mu(\langle\rangle)$ (the environment's initial percept
  does not depend on the agent), and therefore
  $y_1=\pi(\langle x_1\rangle)=\pi(\langle x'_1\rangle)
  =\pi_{\RC}(\langle x'_1\rangle)
  =y'_1$. For the induction step,
  \begin{align*}
    x_{n+1} &= \mu(x_1y_1\ldots x_ny_n) &\mbox{(Definition 1 part 3)}\\
      &= \mu(x'_1y'_1\ldots x'_ny'_n) &\mbox{(By induction)}\\
      &= x'_{n+1}, &\mbox{(Definition 1 part 3)}\\
    y_{n+1} &= \pi(x_1y_1\ldots x_{n+1}) &\mbox{(Definition 1 part 3)}\\
      &= \pi(x'_1y'_1\ldots x'_{n+1}), &\mbox{(Induction plus $x_{n+1}=x'_{n+1}$)}\\
  \end{align*}
  and the latter is $\pi_{\RC}(x'_1y'_1\ldots x'_{n+1})$
  since for all $1\leq i<n+1$, $y'_i=\pi_{\RC}(x'_1y'_1\ldots x'_i)$
  (so $x'_1y'_1\ldots x'_{n+1}$ is possible for $\pi_{\RC}$).
  Finally, $\pi_{\RC}(x'_1y'_1\ldots x'_{n+1})$ is $y'_{n+1}$,
  so $y_{n+1}=y'_{n+1}$.
\end{proof}

Note that part 4 of Theorem \ref{transformationproposition}
shows that $\pi_{\RC}$ never freezes in reality (if $\pi$ does not):
$\pi_{\RC}$ merely commits to freeze in impossible
hypothetical scenarios.

We informally conjecture that if $\pi$ is intelligent but not self-reflective,
then in any extended environment which bases its rewards and observations
on $\pi$'s performance in hypothetical alternate scanarios that might not be
possible for $\pi$,
$\pi_{\RC}$ is likely to enjoy better performance than $\pi$.
Such extended environments include those of
Examples \ref{rewardagentforignoringrewardsexample}
(``Ignore Rewards''), \ref{reverseconsciousnessexample} (``Reverse History'')
and \ref{inclearningrateexample} (``Incentivize Learning Rate'').
Rewards and observations
so determined might be hard to predict if $\pi$ does not self-reflect
on its own behavior in such hypothetical alternate scenarios.
But if those hypothetical alternate scenarios happen to be \emph{impossible}
for $\pi$ (as often happens in extended environments like
Examples \ref{rewardagentforignoringrewardsexample},
\ref{reverseconsciousnessexample}, and
\ref{inclearningrateexample}),
then $\pi_{\RC}$'s hypothetical behavior in such alternate scenarios
is trivial: blind repetition of one fixed action.
This in turn
trivializes the extended environment's dependency on said hypothetical
actions (for dependencies on trivial things are trivial dependencies),
making those extended
environments more predictable. And if $\pi$ is intelligent,
then presumably $\pi$, and thus (by Theorem \ref{transformationproposition}
part 4) $\pi_{\RC}$, can take advantage of such increased predictability.

For example,
let $\pi$ be a deterministic Q-learner
and let $x_1y_1\ldots$ be $\pi_{\RC}$'s interaction
with Example \ref{rewardagentforignoringrewardsexample}
(``Reward Agent for Ignoring Rewards'').
For any particular $n$, the environment computes
$x_{n+1}=\mu(x_1y_1\ldots x_ny_n)$ by checking whether or not
$y_n=\pi_{\RC}(x'_1y_1\ldots x'_n)$, where each $x'_i$ is
the result of zeroing the reward in $x_i$.
If so, $x_{n+1}$'s reward is $+1$, otherwise it is $-1$
(the agent is incentivized to act as if all
past rewards were $0$).
For large enough $n$, since $\pi$ is a Q-learner,
there is almost certainly some $m<n$ such that
$\pi(x_1y_1\ldots x_m)\not=\pi(x'_1y_1\ldots x'_m)$---i.e.,
a Q-learner's behavior depends on past
rewards\footnote{\citet{alexanderhutter} show that if the background
model of computation is
unbiased in a certain sense then all reward-ignoring agents
have Legg-Hutter intelligence $0$. This suggests that any intelligent agent $\pi$
must base its actions on its rewards.}.
Thus by part 1 of Theorem \ref{transformationproposition},
$\pi_{\RC}(x_1y_1\ldots x_n)=\pi_{\RC}(\langle x_1\rangle)=y_1$.
Thus eventually the environment becomes trivial when $\pi_{\RC}$ interacts with it:
``reward action $y_1$ and punish all other actions''.
A Q-learner, and thus (by Theorem \ref{transformationproposition} part 4)
$\pi_\RC$, would thrive in such simple conditions.

\begin{remark}
  If we pick some fixed action $y$, then a simpler variation on the
  reality check transformation could be defined. Namely: for any agent
  $\pi$, we could define the \emph{reality check defaulting to $y$} of
  $\pi$, $\pi_{\RC(y)}$, recursively by:
  \begin{itemize}
    \item $\pi_{\RC(y)}(x_1y_1\ldots x_n) = \pi(x_1y_1\ldots x_n)$
      if $x_1y_1\ldots x_n$ is possible for $\pi_{\RC(y)}$.
    \item $\pi_{\RC(y)}(x_1y_1\ldots x_n) = y$
      otherwise.
  \end{itemize}
  Theorem \ref{transformationproposition} and its proof would be easy
  to modify to apply to $\pi_{\RC(y)}$, and
  the same goes for our above informal conjecture about the relative
  performance of $\pi$ and $\pi_{\RC}$.
  We prefer to define $\pi_{\RC}$ the way we have done so
  (Definition \ref{realitycheckdefn}) in order to avoid the arbitrary
  choice of fixed action $y$. This is also more appropriate for practical
  RL implementations (such as those based on OpenAI gym \citep{brockman2016openai})
  where there generally is not one single fixed action-space, but rather, the
  action-space varies from environment to environment, and practical
  agents (such as those in Stable Baselines3 \citep{stable-baselines3}) must
  therefore be written in a way which is agnostic to the action-space.
\end{remark}

In \citep{library} we implement reality-check as
a function taking an agent-class $\Pi$ as input. It outputs
an agent-class $\Sigma$. A $\Sigma$-instance $\sigma$ computes
actions using a $\Pi$-instance $\pi$ which it initializes once and
then stores.
Thus, an extended environment
simulating a $\Sigma$-instance indirectly
simulates a $\Pi$-instance: a simulation within a simulation.
When trained, $\sigma$ checks if the training data is consistent with its own
action-method. If so, it trains $\pi$ on that data. Otherwise, $\sigma$ freezes,
thereafter ignoring future training data and repeating its
first action blindly.
If $\Pi$ is semi-deterministic
(Definition \ref{semideterministicdefn}),
it follows that $\Sigma$ is too.

\subsection{Does Reality Check increase self-reflection?}

We informally argued above that if $\pi$ is intelligent and not already
self-reflective, then in any extended environment which bases its rewards
and observations on $\pi$'s performance in hypothetical alternate scanarios
that might not be possible for $\pi$,
$\pi_{\RC}$ is likely to enjoy better performance than $\pi$.
Does this imply that $\pi_{\RC}$ is more self-reflective than $\pi$
as measured by $\Upsilon_{ext}$?

The answer depends on the choice of the background UTM behind the
definition of $\Upsilon_{ext}$ (Definition \ref{universalselfrefintdefn}).
Fix $\pi$.
In general, the set of all well-behaved computable extended environments can
be partitioned into three subsets:
\begin{enumerate}
  \item Extended environments where $\pi_{\RC}$ outperforms $\pi$.
  \item Extended environments where $\pi$ outperforms $\pi_{\RC}$.
  \item Extended environments where $\pi$ and $\pi_{\RC}$ perform equally well.
\end{enumerate}
If the most highly-weighted extended environments (i.e., the simplest extended
environments, as measured by Kolmogorov complexity, based on the background UTM)
are dominated by those of type (1), then that would suggest
$\Upsilon_{ext}(\pi_{\RC})>\Upsilon_{ext}(\pi)$.
On the other hand, if the highly-weighted extended environments are dominated
by those of type (3), then that would suggest $\Upsilon_{ext}(\pi_{\RC})<\Upsilon_{ext}(\pi)$.
One could artificially contrive background UTMs of either kind (once again proving
Leike and Hutter's observation \citep{leike2015bad} that Legg-Hutter-style intelligence
is highly UTM-dependent, in the sense that different UTMs yield qualitatively
different intelligence measures).

Environments like those of
Examples \ref{rewardagentforignoringrewardsexample},
\ref{reverseconsciousnessexample}, and
\ref{inclearningrateexample} (where we informally conjecture $\pi_{\RC}$ tends to
outperform $\pi$ when $\pi$ is intelligent and not already self-reflective)
do seem natural, in some subjective sense. On the other hand, we are not aware
of any natural-seeming environments where $\pi_{\RC}$ would generally underperform
$\pi$. One could contrive such environments, e.g., environments which simulate the
agent in impossible scenarios and deliberately punish the agent for seemingly
freezing in those scenarios. But such environments seem contrived and unnatural.
And the spirit of the Legg-Hutter intelligence measurement idea is to weigh
environments based on how natural they are (Kolmogorov complexity serving as a
proxy for naturalness, at least ideally---but this depends on the background UTM
being natural, and no-one knows what it really means for a background UTM to be
natural, see \citet{leike2015bad}).

Thus it at least seems plausible that if the background UTM were chosen in a sufficiently
natural way then $\Upsilon_{ext}(\pi_{\RC})$ would tend to exceed $\Upsilon_{ext}(\pi)$
for intelligent agents $\pi$ not already self-reflective. This would make sense,
as the process of looking back on history and verifying that one would really have
performed the actions which one supposedly performed, is an inherently
self-reflective process. To answer the question with perfect accuracy, the agent
would have to ``put itself in its own earlier self's shoes,'' asking: ``What would
I hypothetically do in response to such-and-such history?'' But again, it all
depends on the choice of the background UTM.

\section{Toward practical benchmarking}
\label{measurementssection}

Our abstract measure $\Upsilon_{ext}$ (Definition \ref{universalselfrefintdefn})
is not practical for performing actual calculations. Kolmogorov complexity is
non-computable, so $\Upsilon_{ext}$ cannot be computed in practice
(although there has been work on computably approximating Legg-Hutter intelligence
\citep{legg2013approximation}, and the same technique could be applied to
approximate $\Upsilon_{ext}$).

In actual practice, the performance of RL agents is often estimated by running the
agents on specific environments, such as those in OpenAI gym \citep{brockman2016openai}.
Such benchmark environments should ideally not be overly simplistic, because
it is possible for very simple (and obviously \emph{not} intelligent) agents to
perform quite well in simplistic environments just by dumb luck.
The example environments from Section \ref{examplesection} are theoretically
interesting, but are far too simplistic to serve as good practical benchmarks.

Therefore for practical benchmarking purposes, we propose combining extended
environments with OpenAI gym environments.
We will define such combinations, not for arbitrary
extended environments, but only for extended environments with a special form.

\begin{definition}
\label{openaigymadaptability}
  Assume $E$ is a practical extended environment-class
  (as in Section \ref{practicalformalizationsecn}).
  We say that $E$ is \emph{adaptable with OpenAI gym} if the following requirements
  hold.
  \begin{enumerate}
    \item When $E$'s \emph{\_\_init\_\_} method is called with agent-class $A$,
      the method initiates a single instance $\mathrm{self.sim}$ of $A$ and does
      nothing else.
    \item In $E$'s \emph{step} method, a variable $\mathrm{reward}$ is initiated
      and its value is not modified for the remainder of the \emph{step} call.
      All invocations of $\mathrm{self.sim.train}$ occur after the initiation of
      $\mathrm{reward}$, and all other code (except for the method's
      final \emph{return} statement) occurs before the initiation of
      $\mathrm{reward}$.
      Finally, $\mathrm{reward}$ is the reward which the \emph{step} call returns.
    \item The rewards output by $E$ are always in $\{-1,0,1\}$.
  \end{enumerate}
\end{definition}

For example, the practical implementation of IgnoreRewards in
Listing \ref{ignorerewardspracticallyexample} is adaptable with OpenAI gym.

Recall that in Section \ref{prelimsecn} we assumed fixed finite sets of
actions and observations. Thus the notion of extended environments implicitly
depends upon that choice, and a different choice of
actions and observations would yield a different
notion of extended environments. In the following definition, we may therefore
speak of the actions and observations of a given extended environment-class $E$,
and define a new extended environment-class with different actions and observations.
Note that OpenAI gym environments also come equipped with action- and observation-spaces.

\begin{definition}
\label{gstaredefn}
  Suppose $G$ is an OpenAI gym environment-class
  (whose action-space and observation-space are finite)
  and $E$ is a practical extended environment-class
  (as in Section \ref{practicalformalizationsecn}).
  Assume $E$ is adaptable with OpenAI gym (Definition \ref{openaigymadaptability}).
  We also assume $E$ does not use the variables $\mathrm{self.G\_instance}$ or
  $\mathrm{self.prev\_obs}$.
  We define a new practical extended environment-class
  $G*E$, the \emph{combination of $G$ and $E$}, as follows.
  \begin{itemize}
    \item
    Actions in $G*E$ are pairs $(a_G,a_E)$ where $a_G$ is
    a $G$-action and $a_E$ is an $E$-action.
    \item
    Observations in $G*E$ are pairs $(o_G,o_E)$ where
    $o_G$ is a $G$-observation and $o_E$ is an $E$-observation.
    \item
    In its \emph{\_\_init\_\_} method, $G*E$
    instantiates a $G$-instance $\mathrm{self.G\_instance}$,
    and then
    runs the code in $E$'s init method
    (so $\mathrm{self.sim}$ is defined when $\mathrm{self}$ is
    the resulting $G*E$-instance).
    \item
    $G*E$ begins the agent-environment interaction with
    initial observation $(o_{0,G},o_{0,E})$, where $o_{0,G}$
    and $o_{0,E}$ are the initial observations output by $G$ and $E$,
    respectively.
    \item
    $G*E$ maintains a variable $\mathrm{self.prev\_obs}$ which is always
      equal (in any $G*E$-instance)
      to the $G$-observation component of
      the previous observation which the $G*E$-instance output.
    \item
    When its \emph{step} method is called on the
    agent's latest action $(a_G,a_E)$, $G*E$ does the following:
    \begin{itemize}
      \item
      Pass $a_G$ to $\mathrm{self.G\_instance.step}$ and
      let $r_G$ and $o_G$ be the
        resulting reward and observation. If $G$ is episodic
        and the output of $\mathrm{self.G\_instance.step}$ indicates
        the episode ended, then let $o_G=\mathrm{self.G\_instance.reset()}$.
      \item
      Run $E$'s \emph{step} method (with $\mathrm{action}$ replaced by $a_E$)
      up to and including the initiation of $\mathrm{reward}$ therein;
      whenever $E$'s \emph{step} method would call $\mathrm{self.sim.act}$
      with observation $o$, instead call $\mathrm{self.sim.act}$
      with observation $(\mathrm{self.prev\_obs},o)$
      and take only the $E$-action component of its output.
      \item
      If $\mathrm{reward}=-1$ then redefine $\mathrm{reward}=\min(r_G-1,-1)$,
      otherwise redefine $\mathrm{reward}=r_G$.
      \item
      Run the remaining part of $E$'s \emph{step} method (the code after
      $\mathrm{reward}$ was initiated). Whenever $E$'s \emph{step} method
      would call $\mathrm{self.sim.train}$
      with input $(o_1,a,r,o_2)$, instead call
      $\mathrm{self.sim.train}$ with
      input $((\mathrm{self.prev\_obs},o_1),(a_G,a),r,(o_G,o_2))$.
      When $E$'s \emph{step} method would return $(\mathrm{reward},o)$,
      instead return $(\mathrm{reward},(o_G,o))$.
    \end{itemize}
  \end{itemize}
\end{definition}

For example, Listing \ref{cartpolelisting}
is code for the combination of OpenAI gym's \emph{CartPole} environment
with the practical implementation of IgnoreRewards from
Listing \ref{ignorerewardspracticallyexample}.

\begin{figure}[t]
\begin{normalfont}
\lstset{language=Python}
\lstset{frame=lines}
\lstset{basicstyle=\footnotesize}
\begin{lstlisting}[caption={
\label{cartpolelisting}
The combination of IgnoreRewards with OpenAI gym's CartPole environment.}]
class CartPole_IgnoreRewards:
  def __init__(self, A):
    self.G_instance = gym.make('CartPole-v0')
    self.sim = A()
  def start(self):
    self.prev_obs = self.G_instance.reset()
    return (self.prev_obs, 0)
  def step(self, action):
    a_G, a_E = action
    o_G, r_G, episode_done, misc_info = self.G_instance.step(a_G)
    if episode_done:
      o_G = self.G_instance.reset()

    hypothetical_act = self.sim.act(obs=(self.prev_obs, 0))
    hypothetical_act = hypothetical_act[1]  # Take E-action component
    reward = 1 if a_E==hypothetical_act else -1

    if reward == -1:
      reward = min(r_G-1, -1)
    else:
      reward = r_G

    self.sim.train(o_prev=(self.prev_obs,0), a=(a_G,a_E), r=0, o_next=(o_G,0))
    return (reward, (o_G, 0))
\end{lstlisting}
\end{normalfont}
\end{figure}

One can think of $G*E$ (Definition \ref{gstaredefn}) as follows.
Imagine that the OpenAI gym environment is intended to be run on a
screen with a joystick attached to allow player interaction.
Instead of attaching one joystick, we attach $n$ joysticks
(where $n$ is the number of the actions $\{a_1,\ldots,a_n\}$ in $E$), colored with $n$
different colors but otherwise identical.
We also add a second monitor for displaying observations from $E$.
When the player pushes button $i$ on joystick $j$,
action $i$ is sent to $G$ and action $a_j$ is sent to $E$.
The player sees the original monitor update with the new observation from $G$,
and the additional monitor update with the new observation from $E$.
The player receives the reward from $G$, except that if $E$ outputs reward
$-1$ then a penalty is applied to the reward from $G$.
Thus the player is incentivized to interact with $G$ as usual, but to do so
using joysticks in the way incentivized by $E$.

For example, if $E$ is ``IgnoreRewards'' (Example \ref{rewardagentforignoringrewardsexample})
and the action-space contains $2$ actions,
then the player has two joysticks, identical except for their color,
and each joystick has the same effect on $G$, but the player is
penalized any time the player uses a different joystick than the player
would hypothetically have used if everything that has happened so far
happened except with all rewards $0$. To master such a game, the player
would apparently require the practical intelligence necessary to master $G$,
along with the self-reflection required to figure out (and avoid) the
penalties from $E$.

To illustrate the usage of Definition \ref{gstaredefn} for practical benchmarking
purposes, we have used it to combine:
\begin{itemize}
  \item OpenAI gym's CartPole environment with the ``IgnoreRewards'' extended environment
    (Example \ref{rewardagentforignoringrewardsexample}).
  \item OpenAI gym's CartPole environment with the ``Incentivize Learning Rate''
    extended environment (Example \ref{inclearningrateexample}).
\end{itemize}
We took implementations of DQN and PPO from Stable-Baselines3
\citep{stable-baselines3} and we modified them to be semi-deterministic
(Definition \ref{semideterministicdefn}) and to be able to interact with
extended environment-classes. We ran them, and their reality checks
(Definition \ref{realitycheckdefn}) for 10,000 CartPole-episodes each
on the above two combined extended environments (and we repeated the whole
experiment $10$ times with different RNG seeds).
Figure \ref{ignorerewardscartpolefigure} shows the results for
CartPole-IgnoreRewards, and Figure \ref{inclearnratecartpolefigure}
shows the results for CartPole-IncentivizeLearningRate.
As expected based on the discussion in Section \ref{realitychecksection},
we see that for both DQN and PPO, the reality check transformation
significantly improves performance in the combined environments.

\begin{figure}
  \includegraphics[width=15cm]{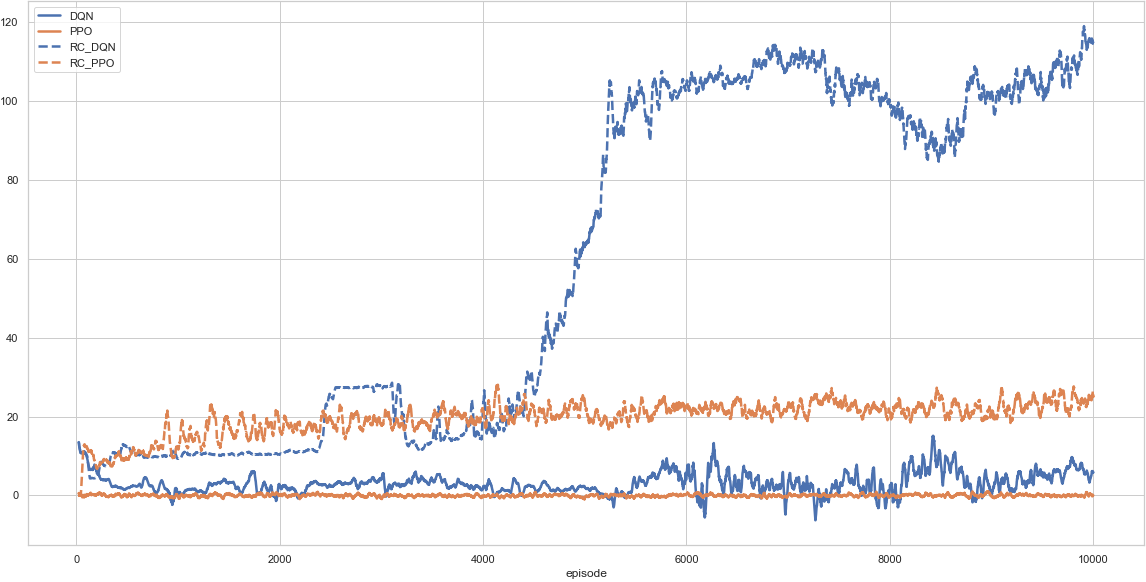}
  \caption{Performance of DQN, PPO, and their reality-checks
  on an extended environment combining OpenAI gym's CartPole and
  our IgnoreRewards. Episode number is plotted on the horizontal
  axis, and average episode reward is plotted on the vertical
  axis.}
  \label{ignorerewardscartpolefigure}
\end{figure}

\begin{figure}
  \includegraphics[width=15cm]{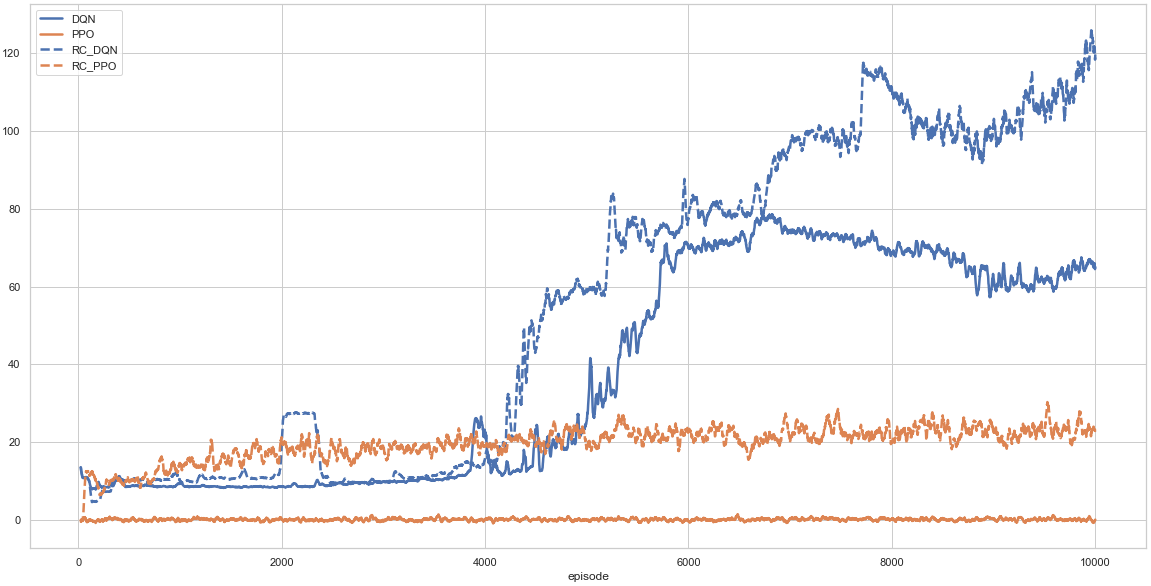}
  \caption{Performance of DQN, PPO, and their reality-checks
  on an extended environment combining OpenAI gym's CartPole and
  our IncentivizeLearningRate. Episode number is plotted on the horizontal
  axis, and average episode reward is plotted on the vertical
  axis.}
  \label{inclearnratecartpolefigure}
\end{figure}

\section{Conclusion}

We introduced \emph{extended environments} for reinforcement learning.
When computing rewards and observations,
extended environments can consider not only actions the RL agent has taken, but also
actions the agent would hypothetically take in other circumstances.
Despite not being designed with such environments in mind, RL agents
can nevertheless interact with such environments.

An agent may find an extended environment hard to predict if the agent
only considers what has actually happened, and not its own hypothetical
actions in alternate scenarios.
We argued that for good performance (on
average) across many extended environments, an agent would need to
self-reflect to some degree.
Thus, we propose that an abstract theoretical measure of
an agent's self-reflection intelligence can be obtained by
modifying the definition of the Legg-Hutter universal intelligence measure.
The Legg-Hutter universal intelligence $\Upsilon(\pi)$ of an RL agent
$\pi$ is $\pi$'s weighted average performance across the space of all
suitably well-behaved traditional RL environments, weighted according to
the universal prior (i.e., environment $\mu$ has weight $2^{-K(\mu)}$ where
$K(\mu)$ is the Kolmogorov complexity of $\mu$).
We defined a measure $\Upsilon_{ext}(\pi)$ for RL agent $\pi$'s
self-reflection intelligence similarly to Legg and Hutter, except that
we take the weighted average performance over the space of suitably
well-behaved extended environments.

We gave some theoretically interesting examples of Extended Environments
in Section \ref{examplesection}. More examples are available in an open-source
MIT-licensed library
of extended environments which we are publishing simultaneously with this
paper \citep{library}.

We pointed out (Proposition \ref{qualitativedifferenceprop}) a key qualitative
difference between our self-reflection intelligence measure and the measure
of Legg and Hutter: two agents can agree in all ``possible''
scenarios (i.e., scenarios where the agents' past actions are consistent with
their policies, Definition \ref{traditionalequivalencedefn}),
and yet nevertheless have different self-reflection intelligence
(because they disagree on ``impossible'' scenarios---scenarios
which cannot ever happen in reality because they involve the agents
taking actions the agents never would take, but that nevertheless
extended environments can simulate the agents in,
as if to say: ``I doubt this agent would ever jump off this bridge, but
I'm going to run a simulation to see what the agent \emph{would} do immediately
after jumping off the bridge anyway''). With such impossible scenarios in mind,
we introduced a so-called \emph{reality check} transformation (Section
\ref{realitychecksection}) and informally conjectured that the transformation
tends to improve the performance of agents who are intelligent and not already
self-reflective in certain extended environments. We saw some experimental
evidence in favor of this conjecture in Section \ref{measurementssection},
where we discussed combining extended environments with sophisticated
traditional environments (such as those of OpenAI gym) to obtain practical
benchmark extended environments.

\section*{Acknowledgments}

We gratefully acknowledge Joscha Bach, James Bell, Jordan Fisher,
Jos{\'e} Hern{\'a}ndez-Orallo, Bill Hibbard, Marcus Hutter, Phil Maguire,
Arthur Paul Pedersen,
Stewart Shapiro, Mike Steel, Roman Yampolskiy,
and the editors and reviewers for comments and feedback.

\bibliography{bib}

\end{document}